\documentclass[10pt]{article}
\usepackage[preprint]{tmlr}
\usepackage{amsmath,amssymb,amsthm,microtype}
\usepackage[hidelinks]{hyperref}
\usepackage{url}
\usepackage{tikz}
\usetikzlibrary{arrows.meta}
\definecolor{figink}{HTML}{243549}
\definecolor{figteal}{HTML}{137C80}
\definecolor{figblue}{HTML}{416CB0}
\definecolor{figrust}{HTML}{B65F3F}
\definecolor{figgray}{HTML}{7B8490}
\definecolor{figpale}{HTML}{F3F6F9}
\newtheorem{theorem}{Theorem}

\newtheorem{corollary}[theorem]{Corollary}
\newcommand{\R}{\mathbb R}

\newcommand{\norm}[1]{\left\lVert#1\right\rVert}
\newcommand{\minus}{\setminus}
\newcommand{\cur}{\mathcal C}
\newcommand{\bits}{\mathsf B}
\title{Exact Unlearning After Data Curation\thanks{OpenAI Codex assisted with prose writing.}}
\author{Priyank Jayraj, Devanshu Sahoo, and Uday Mittla\\
\textnormal{Birla Institute of Technology and Science, Pilani}\\
\textnormal{Pilani, Rajasthan, India}}
\date{}
\hypersetup{pdfauthor={Priyank Jayraj, Devanshu Sahoo, Uday Mittla},pdftitle={Exact Unlearning After Data Curation}}
\begin{document}
\maketitle
\begin{abstract}
Training pipelines often remove similar examples before fitting a model.
Deleting a raw record can change that selection and admit an example that
was excluded before. What must a service retain to reproduce training
after such requests? We answer this question for a fixed rule in which
earlier similar records block later ones. A limit on the total number of
future deletions identifies exactly which records can still enter training.
We use this boundary to build a summary of training statistics that can
be repaired exactly after repeated deletions. The repaired summary is the
same as rebuilding it on the surviving data with the remaining budget.
We also construct finite-precision features for ridge regression, a linear
learner with a penalty on coefficient size. With exact rational outputs,
allowing one more deletion changes the required private information.
It rises from zero to every private feature sign in the excluded candidates. An encoding
attains this conditional private-information minimum on the constructed
family throughout a sequence of requests.
On 1,001 AskUbuntu development records, deleting an initially excluded
record admits another record at 13 of 51 fixed similarity cutoffs. Exact
membership repair passes 13,056 deletion-step checks. Two model-based
semantic calibrations fail their fixed criteria, so the downstream task
study is not run. The central result is a retention contract: the future
deletions a service promises to support determine which excluded data
can still be needed.
\end{abstract}

\section{Introduction}
Training begins before a model is fitted. A pipeline may first remove
near-duplicate documents, choose representative examples, or filter a
collection for quality. These decisions depend on relationships among
records. When a record is later withdrawn, applying the same selection
rule to the surviving collection can change which examples enter training.
A deletion service that updates only the old training set can therefore
reproduce the wrong model.

Semantic deduplication makes this issue concrete. It compares document
representations and removes examples that are too similar to earlier
ones \citep{semdedup}. Consider the three-record illustration in
Figure~\ref{fig:chain}. Record $a$ precedes $b$, which precedes $c$.
Under the rule studied here, every earlier raw neighbor blocks a record,
even when that neighbor is itself excluded. The illustration shows why
this makes an excluded record relevant to a later model fit.

\begin{figure}[t]
\centering
\begin{tikzpicture}[x=1cm,y=1cm,
 font=\sffamily\fontsize{9}{11}\selectfont,text=figink,
 record/.style={rounded corners=3pt,minimum width=.86cm,minimum height=.72cm,
   draw=figgray!45,fill=white,line width=.65pt,font=\sffamily\bfseries\normalsize},
 chosen/.style={record,draw=figteal,fill=figteal!10,text=figteal},
 blocked/.style={record,draw=figgray!45,fill=white,text=figgray},
 link/.style={-{Stealth[length=4pt,width=4pt]},line width=.85pt,draw=figgray!70}]
\path[use as bounding box] (0,0) rectangle (15.8,3.52);
\fill[figpale,rounded corners=5pt] (0,.64) rectangle (6.32,3.52);
\fill[figteal!5,rounded corners=5pt] (9.48,.64) rectangle (15.8,3.52);
\node[anchor=west,font=\sffamily\bfseries\small] at (.28,3.14) {Before deletion};
\node[anchor=east,text=figgray] at (6.04,3.14) {Priority: $a\prec b\prec c$};
\node[chosen] (a0) at (1.16,2.24) {$a$};
\node[blocked] (b0) at (3.16,2.24) {$b$};
\node[blocked] (c0) at (5.16,2.24) {$c$};
\draw[link] (a0) -- node[above,font=\sffamily\scriptsize] {blocks} (b0);
\draw[link] (b0) -- node[above,font=\sffamily\scriptsize] {blocks} (c0);
\node[text=figteal] at (1.16,1.58) {selected};
\node[text=figgray] at (3.16,1.58) {excluded};
\node[text=figgray] at (5.16,1.58) {excluded};
\node[font=\sffamily\bfseries\small] at (3.16,.98) {Training set: $\{a\}$};
\draw[-{Stealth[length=6pt,width=6pt]},line width=1.2pt,figrust]
 (6.63,2.24) -- (9.16,2.24);
\node[align=center,text=figrust,font=\sffamily\bfseries\small] at (7.9,2.9) {Delete $b$};
\node[align=center,text=figgray,font=\sffamily\scriptsize] at (7.9,1.54) {It never entered\\the original fit.};
\node[anchor=west,font=\sffamily\bfseries\small] at (9.76,3.14) {Select again from survivors};
\node[chosen] (a1) at (10.64,2.24) {$a$};
\node[record,draw=figrust!60,dashed,text=figrust,fill=white] (b1) at (12.64,2.24) {$b$};
\draw[figrust,line width=.85pt] (12.32,1.99) -- (12.96,2.49);
\node[chosen] (c1) at (14.64,2.24) {$c$};
\node[text=figteal] at (10.64,1.58) {selected};
\node[text=figrust] at (12.64,1.58) {deleted};
\node[text=figteal,font=\sffamily\bfseries\small] at (14.64,1.58) {newly selected};
\node[font=\sffamily\bfseries\small] at (12.64,.98) {Training set: $\{a,c\}$};
\draw[figgray!30,line width=.5pt] (.28,.32) -- (15.52,.32);
\node[font=\sffamily\small] at (7.9,-.02) {Updating only the old training set would keep $\{a\}$ and miss $c$.};
\end{tikzpicture}
\caption{An excluded record can still control selection. Arrows show blocking
by earlier raw neighbors, including excluded neighbors. There is no $a,c$
edge. Deleting $b$ changes the training set and can change the fitted model.}
\label{fig:chain}
\end{figure}

Let $\mathcal D$ be the raw collection, $F$ the records deleted so far,
$\cur$ the selection rule, and $A$ the learner. We study the target
\begin{equation}\label{eq:target}
                 A\bigl(\cur(\mathcal D\minus F)\bigr).
\end{equation}
It means selecting from the surviving raw records and then training again.
Updating the original selected set instead gives
$A(\cur(\mathcal D)\minus F)$. These expressions place deletion on
different sides of selection, so they define different retention targets.

\paragraph{The practical question: what must remain recoverable?}
Keeping the entire raw collection allows fresh reconstruction after every
request. A service may instead keep summaries and discard raw payloads,
such as feature rows and labels. It then needs to know which excluded
records can affect a future target. We make the permitted future requests
explicit through a cumulative deletion budget: the maximum number of
records that may be deleted over the life of the retained state.

For the rule above, an excluded record becomes selected only after all
its earlier blockers are deleted. If it has more surviving blockers than
remaining deletions, it cannot enter training within the contract. Its
payload can be discarded for this target. If the budget is large enough
to remove those blockers, the record remains potentially relevant.
The budget thus connects an operational promise to a concrete retention
decision. It decreases with every newly deleted record. A fresh allowance after
each request would be a different promise.

\paragraph{Two connected results.}
First, we derive an exact summary of additive training statistics for this
selection rule. Its update removes deleted contributions and adds newly
selected ones. More strongly, the updated summary equals fresh construction
on the surviving records at the remaining budget. This permits repeated
requests without accumulating a different logical state.

Second, we show that the budget can determine necessary information, not
just sufficient storage. We construct features that are exactly stored
in single precision (FP32) and used by both selection and ridge regression.
Under one budget, every model is zero. Increasing the budget by one makes
every private feature sign in the excluded candidates necessary for exact
model recovery across the allowed requests. A sequential encoding retains
exactly the signs that can still matter. The current model alone cannot
describe this future information requirement.

The natural-data study examines the same selection and repair mechanism on
fixed AskUbuntu score graphs. It checks exact membership and continuing
state across all original cutoffs. Its separate semantic calibration uses
two local language models and zero human pair review. Both calibration
versions fail their fixed selection criteria; the threshold-qualified
utility and cost study therefore remains unexecuted. Section~\ref{sec:evidence}
presents the completed evidence and its empirical scope together.

\paragraph{Position of the contribution.}
Summaries for preprocessing-aware unlearning \citep{cao}, incremental view
maintenance \citep{ivmunlearning}, and state that supports later deletions
\citep{ginart,cohencontrol} provide the starting point. We specialize signed
provenance \citep{green,aggregate} to a fixed ordered selection rule and
prove its remaining-budget state identity. The information result joins
finite stored features, stable similarity decisions, exact learner outputs,
and a matching sequential state in one construction. The resulting design
question is precise: which future requests can this retained state still
answer? Section~\ref{sec:related} compares the relevant contracts.

\section{From a deletion promise to a retention rule}\label{sec:setup}
We first define the selection rule and the service contract. The key
objects are the blockers of each record and the deletions still allowed.
No model-specific algebra is needed to determine future eligibility.

\subsection{A fixed ordered selection rule}
Each raw record $v\in\mathcal D$ has a stable identifier, a fixed feature
vector $z_v$, and, when needed, a label $y_v$. A graph connects pairs that
qualify as similar. A fixed total order $\prec$ gives one record priority
over another. An edge can require both membership in the same fixed
partition and cosine similarity strictly above a threshold $\tau$.
Define the \emph{blocker set} of $v$ as its earlier raw neighbors:
\[
                  B(v)=\{u:u\prec v,\ \{u,v\}\text{ is an edge}\}.
\]
After a cumulative deletion set $F$, selection returns
\begin{equation}\label{eq:activation}
 S_F=\cur(\mathcal D\minus F)=\{v\notin F:B(v)\subseteq F\}.
\end{equation}
A record must survive, and every one of its blockers must be gone.
In Figure~\ref{fig:chain}, the blocker sets are
$B(a)=\varnothing$, $B(b)=\{a\}$, and $B(c)=\{b\}$.

This is \emph{raw-neighbor suppression}. Greedy selection against only
previously retained neighbors is a different rule: it would select both
$a$ and $c$ in the initial chain. For nonnegative thresholds, the
upper-triangle column-maximum test in the released SemDeDup code has
our suppression form \citep{semcode}. We fix this operator's graph and
order, including any pair-eligibility partition.

\paragraph{What stays fixed under deletion.}
Features, pair eligibility, relative priority, and threshold remain
unchanged on surviving records. We call this \emph{restriction consistency}.
It makes the retained graph the original graph restricted to surviving
vertices. Ordering records by a hash of each stable ID, with an ID tie
break, preserves their relative priority. Repeating a corpus-dependent
shuffle with the same seed need not. Similarly, refitting clusters or a
text encoder can change surviving edges and defines a different target.
Here the pretrained representation is held fixed throughout repair.

\subsection{Which records can still enter training?}
Let $K$ be the original cumulative record-deletion budget. After deleting
$F$, the remaining budget is $h=K-|F|$. We say a surviving record is
\emph{eligible} if some continuation using at most $h$ further deletions
can select it. The eligible set is exactly
\begin{equation}\label{eq:eligible}
 E(F,h)=\{v\notin F:|B(v)\minus F|\le h\}.
\end{equation}
To select $v$, a request must remove all its surviving blockers. This is
possible precisely when their number fits in the remaining budget.
Eligibility includes currently selected records, whose remaining blocker
count is zero, as well as some currently excluded records.

The rule supports permanent discard within the contract. Suppose a record
has $r>h$ remaining blockers. After a further $\ell$ deletions, it has at
least $r-\ell$ blockers and the budget becomes $h-\ell$. It still has
more blockers than the budget allows. Thus eligibility can only shrink
along a valid sequence. The same argument shows that a currently selected
record remains selected until it is itself deleted.

For example, consider a surviving record with four blockers and a budget
of three further deletions. Removing two of its blockers leaves two
blockers and one deletion available. Removing an unrelated record instead
leaves all four blockers and a budget of two. Neither action can make the
record reachable. This simple count is the basis for safe payload discard;
it does not require anticipating the order of later requests.

\subsection{What the service returns and what it may read}
Three recovery targets are useful. \emph{Corpus recovery} returns the
selected identifiers $S_F$. \emph{Model recovery} returns the specified
learner fitted on those records. \emph{State recovery} returns a designated
summary that supports the remaining requests. We compare that summary
with a fresh construction on the survivors using the same remaining
budget. Two states can give the same current model while supporting
different future requests, as Section~\ref{sec:memory} will demonstrate.

For the storage question, all usable private-data-dependent information
is charged to the retained state. This includes models, caches, tickets,
remote stores, retained random seeds, and earlier outputs that repair can
read. There is no free reread of surviving candidate features. Public
metadata is fixed independently of the private payload, but its physical
storage still counts in a complete implementation. These conventions
separate a private-information bound from a total-byte comparison.

The state identity proved next concerns a logical coefficient map.
It is an exact specification for continuing computation. Physical erasure,
previous releases, and external privacy obligations are discussed in
Appendix~\ref{sec:limits}.

\section{Building an exact repair summary}\label{sec:method}
Many learners can be fitted from additive statistics. Rather than store
one selected total, we represent how that total changes when blockers
are deleted. The representation records the conditions under which each
row contributes, including the condition that the row itself survives.

\subsection{From one record to a signed coefficient map}
Let $t(v)\in\R^s$ be a fixed statistic of record $v$, with $s$ coordinates.
It is record-local: it does not change when other records are deleted.
For each record $u$, introduce a deletion indicator $x_u$, equal to one
when $u$ is deleted and zero otherwise. For an identifier set $J$, write
$x_J=\prod_{u\in J}x_u$, with $x_\varnothing=1$.

The statistic of $v$ should contribute exactly when all its blockers are
deleted and $v$ survives. Its contribution is therefore
$t(v)(1-x_v)x_{B(v)}$. Summing over records gives
\begin{align}
 T(x)&=\sum_v t(v)(1-x_v)x_{B(v)}
       =\sum_J\alpha_J x_J,\label{eq:poly}\\
 \alpha_J&=\sum_{v:B(v)=J}t(v)
       -\sum_{v:B(v)\cup\{v\}=J}t(v).\label{eq:coeff}
\end{align}
A key $J$ names a set of deletions, and its signed value $\alpha_J$ is
added when every deletion in that key has occurred. Each record contributes
to at most two keys: a positive term for removing its blockers and a
negative term for also removing the record. Combining equal keys can
cancel contributions. This is a signed Boolean specialization of
provenance for data and aggregates \citep{green,aggregate}.

\paragraph{Worked example: deletion requires addition and cancellation.}
For the three-record chain, abbreviate the statistics as
$A_0=t(a)$, $B_0=t(b)$, and $C_0=t(c)$. Then
\begin{align*}
 T(x)&=A_0(1-x_a)+B_0(1-x_b)x_a+C_0(1-x_c)x_b\\
 &=A_0+(B_0-A_0)x_a+C_0x_b-B_0x_ax_b-C_0x_bx_c.
\end{align*}
With an original budget $K=2$, every displayed key is retained.
Deleting $b$ sets $x_b=1$ and leaves a budget of one:
\[
                  T_{\rm retained}(x_a,x_c)
                   =(A_0+C_0)-A_0x_a-C_0x_c.
\]
The $B_0$ terms cancel, while $C_0$ enters the current total. This is
exactly the expression built from the two unconnected survivors $a,c$.
Deleting a previously excluded record thus needs more than subtracting
its old training contribution, which was zero.

\subsection{Truncate once, then reduce the budget}
The degree of a key is its number of identifiers. A key with more than
$K$ identifiers can never be activated by at most $K$ deletions. We can
therefore omit coefficients of degree above $K$ at construction. The
important question is whether an omitted term could become needed after
some of its identifiers are deleted. It cannot when the remaining budget
falls by the same amount.

\begin{theorem}[Finite-horizon state repair]\label{thm:repair}
Keep only coefficients with $|J|\le K$. For every $|F|\le K$,
\[
                 \sum_{v\in S_F}t(v)=\sum_{J\subseteq F}\alpha_J.
\]
After an allowed cumulative deletion $F$, form
\begin{equation}\label{eq:update}
 \beta_L=\sum_{J:J\minus F=L}\alpha_J,
                       \qquad |L|\le K-|F|.
\end{equation}
In exact arithmetic, this is the same coefficient map as fresh construction
on $\mathcal D\minus F$ with horizon $K-|F|$. Repeated requests compose
independently of their batching and order for the same final deleted set.
\end{theorem}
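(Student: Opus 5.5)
The plan is to treat $T(x)$ in \eqref{eq:poly} as a multilinear polynomial in the Boolean indicators and to evaluate it at the indicator vector $\mathbf 1_F$ of the deleted set. Since $x_J(\mathbf 1_F)=1$ exactly when $J\subseteq F$, and $x_v=1$ exactly when $v\in F$, the left form of \eqref{eq:poly} at $\mathbf 1_F$ equals $\sum_v t(v)\,[v\notin F]\,[B(v)\subseteq F]$. By \eqref{eq:activation} this is $\sum_{v\in S_F}t(v)$. The right form gives $\sum_{J\subseteq F}\alpha_J$. Truncation does not change this sum, because every $J\subseteq F$ has $|J|\le|F|\le K$. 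This proves the first identity.

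For the state identity, substitute $x_u=1$ for all $u\in F$ into $\sum_J\alpha_J x_J$ and collect monomials. The term $x_J$ becomes $x_{J\minus F}$, so the resulting polynomial in the surviving variables has coefficient $\beta_L=\sum_{J:J\minus F=L}\alpha_J$. To compare with fresh construction, write the fresh map for $\mathcal D\minus F$ using restriction consistency. The graph is the original graph restricted to survivors, so each surviving $v$ has fresh blocker set $B(v)\minus F$. The fresh coefficient at $L$ is then $\sum_{v\notin F:B(v)\minus F=L}t(v)-\sum_{v\notin F:(B(v)\minus F)\cup\{v\}=L}t(v)$.

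Next I expand $\beta_L$ using \eqref{eq:coeff} and split by whether the record $v$ lies in $F$. For records $v\in F$, the positive term at $B(v)$ and the negative term at $B(v)\cup\{v\}$ have the same image $B(v)\minus F$ after removing $F$, so they cancel. This cancellation is visible in the worked example. For survivors $v\notin F$, the images are exactly the fresh keys $B(v)\minus F$ and $(B(v)\minus F)\cup\{v\}$. Hence the untruncated substituted polynomial equals the untruncated fresh polynomial coefficientwise.

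The main obstacle is truncation. I must show that, for $|L|\le h=K-|F|$, every original key $J$ with $J\minus F=L$ satisfies $|J|\le K$, so none of them was discarded. This holds because $|J|\le|L|+|F|\le h+|F|=K$. Consequently $\beta_L$, computed from the truncated map, equals the full substituted coefficient. By the previous step that is the fresh coefficient, and it is kept by fresh truncation at horizon $h$. The cancellation for deleted records also needs care here: a pair may straddle the degree bound $K$, with the positive term kept and the negative term dropped. Both members of such a pair have $|J|\ge|L|$, but I must check that they are either both kept or both irrelevant. Since both images equal $L$ with $|L|\le h$, both keys have size at most $K$ by the same inequality, so both survive truncation and cancel. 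Finally, order and batching independence follow by composition. Substituting $F_1$ and then $F_2$ at horizon $h-|F_1|$ is the same as substituting $F_1\cup F_2$, because each step equals fresh construction, and alternatively because the map $J\mapsto J\minus F$ composes. Applying the identity twice gives fresh construction on $\mathcal D\minus(F_1\cup F_2)$ with horizon $K-|F_1\cup F_2|$.
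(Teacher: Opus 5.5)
Your proof is correct and follows essentially the same route as the paper's. Evaluating at the deletion indicator means only keys $J\subseteq F$ contribute, and these all have degree at most $K$. Substituting $x_u=1$ for $u\in F$ cancels the deleted records and turns each survivor's blocker set into $B(v)\minus F$. The bound $|J|\le|J\minus F|+|F|\le K$ then shows truncation discards nothing needed at the new horizon; the paper states the same fact contrapositively, noting that a discarded key still has degree at least $K+1-|F|$. Your coefficient-level handling of a straddling positive/negative pair is a more explicit version of what the paper does at the record-polynomial level. In the composition step, apply the identity to $F_2\minus F_1$ on $\mathcal D\minus F_1$ with horizon $K-|F_1|$, rather than the ambiguous ``$h-|F_1|$''.
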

\begin{proof}
A monomial equals one precisely when its key is contained in $F$.
Keys of degree above $K$ therefore contribute zero to the current total.
Substituting $x_u=1$ for every $u\in F$ removes deleted records through
their $(1-x_u)$ factor. For each surviving record it replaces $B(v)$ by
$B(v)\minus F$. Collecting the resulting keys gives Equation~\eqref{eq:update}.
Any term discarded at construction had degree at least $K+1$. Deleting
$|F|$ identifiers reduces its degree by at most $|F|$, leaving degree
at least $K+1-|F|$, above the new cutoff. It cannot affect the fresh
remaining-budget map. Substitutions commute, and the same argument applies
at each intermediate horizon.
\end{proof}

The empty key is the currently selected statistic. Other keys describe
possible future changes. The theorem preserves both parts, so a request
can be answered now without losing the state needed by the next request.
It also supplies a direct oracle: reconstruct the same map from retained
records and compare it with the updated map.

The decreasing budget is essential. In the chain with $K=1$, only the
constant and degree-one terms are stored. Deleting $b$ still gives the
correct current statistic $A_0+C_0$, but the remaining budget is zero.
If one incorrectly kept the old degree-one allowance and then deleted
$a$, that truncated expression would give $B_0+C_0$ instead of $C_0$.
The second request exceeds the original contract. Renewing a deletion
budget requires a state prepared for the larger promise.

\subsection{Indexed updates and their cost}\label{sec:implementation}
An indexed implementation, called P-I in the supplement, stores three
objects: the nonzero coefficient map, an inverse index from each identifier
to keys containing it, and buckets grouping keys by degree. A singleton
deletion follows a short sequence:
\begin{enumerate}
\item Find the keys containing the deleted identifier through the inverse index.
\item Remove that identifier from each key and combine values whose reduced keys coincide.
\item Remove zero coefficients and their index entries.
\item Decrease the remaining budget and discard the newly excessive degree bucket.
\end{enumerate}
A coefficient can move without changing its value. Vector addition is
needed only when keys merge. For example, deleting $b$ in the chain
merges the old constant with key $\{b\}$ and merges $\{a\}$ with
$\{a,b\}$. Their values become $A_0+C_0$ and $-A_0$, respectively.
The scanning implementation P-S applies the same substitution by visiting
all keys. The inverse index avoids scanning unaffected entries.

Let $M_0$ be the initial number of keys and $L_0$ their total length.
Let $Q$ count supplied integer request identifiers across the sequence.
For statistic dimension $s$, define the initial potential
$\Phi_0=\sum_J |J|(|J|+1)/2$. The indexed substitution core has
sequence-wide cost
\begin{equation}\label{eq:work}
 O(Q+\Phi_0+sM_0)\subseteq O(Q+K L_0+sM_0),
\end{equation}
under expected constant-time dictionary operations apart from tuple work.
Shrinking a length-$j$ key costs $O(j)$ and decreases its potential by $j$.
Each merge removes an entry, so at most $M_0$ vector merges occur.
Initial incidences are only removed, giving at most $L_0$
inverse-incidence removals. This amortized argument allows an individual
request to touch all keys.

The coefficient map and indices use $O(sM_0+M_0+L_0)$ words, plus
membership storage. Construction, learner solves, rational bit lengths,
and serialization are separate costs. The frozen Python service also
performs request validation, a full public-ID lookup, sorting, and per-call
work. Appendix~\ref{app:work} gives these costs and retry behavior.
Equation~\eqref{eq:work} describes the substitution core rather than the
complete service.

\subsection{Ridge regression and contributor withdrawal}
For ridge regression, let $z_v\in\R^d$ be a stored feature column and
$y_v\in\R^q$ its response column. A matrix $W\in\R^{d\times q}$
predicts $W^\top z_v$. Use the record statistic
\[
              t(v)=(z_vz_v^\top,z_vy_v^\top,1).
\]
Its selected sum is $(G,H,n)$: a feature cross-product matrix, a
feature-response matrix, and the selected count. Average-loss ridge with
fixed $\lambda>0$ solves
\begin{equation}\label{eq:ridge}
                       (G+\lambda n I)W=H.
\end{equation}
There is no intercept, all coordinates are penalized, and the empty
selected set returns zero. The factor $n$ follows from using average loss;
repairing moments while retaining the old count would change this learner.
Cosine-normalized scoring rows also do not replace the stored raw rows in
its objective.

This specialization makes corpus and model repair distinct operations.
The coefficient update supplies exact moments and count. A subsequent
solve produces the head. Dense statistics have $O(d^2+dq)$ coordinates,
and a dense solve costs $O(d^3+d^2q)$. Whether maintaining these moments
is preferable to retaining compact eligible rows depends on the complete
costs considered in Section~\ref{sec:contracts}.

The same algebra supports withdrawal of all records belonging to one
source. Assume sources partition the records, and write $g(v)$ for the
source of $v$. Define the distinct blocking sources
$U(v)=\{g(u):u\in B(v)\}$. With one deletion indicator per source,
the contribution becomes
\[
                       t(v)(1-x_{g(v)})x_{U(v)}.
\]
Repeated occurrences of one source collapse because a Boolean indicator
satisfies $x^2=x$. If $g(v)\in U(v)$, this expression is zero: removing
that blocking source would also remove $v$. Otherwise the previous
construction and proof apply with a source-count budget. Several blocking
records from one source cost only one withdrawal request. The ownership
partition must therefore be part of the target, alongside the graph.

\section{Why the future budget can require excluded data}\label{sec:memory}
The summary above is sufficient for exact repair. We now ask how much
information any exact model-recovery service can require. The example
uses a collection of excluded candidates. Each has a private word of
feature signs, a shared group of blockers, and one individual blocker.
Removing its blockers exposes that candidate to the learner.

The construction connects graph selection to model information. Public
feature coordinates determine which deletions expose each candidate.
Private coordinates determine the resulting ridge head without changing
the graph. The same stored rows serve both purposes. We first describe
the feature family, then derive its current and future state requirements.

\subsection{A graph that exposes one private word at a time}
Fix $b\ge1$, $m\ge2$, and powers of four $k,D$ with
$d=2+k+D\le2^{20}$. Public signatures
$u_i\in\{\pm1/\sqrt{k}\}^k$ satisfy $|u_i^\top u_j|\le1/6$ for
$i\ne j$; orthogonal Hadamard rows suffice when $m\le k$.
Sort fixed public IDs by their actual priority and assign roles: an anchor
$a$, $b$ common blockers $S_0$, $m$ individual blockers $p_i$, and $m$
candidates $w_i$, in that order. Use orthogonal coordinate blocks
$e_0,e_1,U,H$ of dimensions $1,1,k,D$, and set $\alpha=1/128$.
The public $U$ block identifies a candidate through $u_i$. The $H$ block
carries its private word $z_i\in\{\pm1/\sqrt D\}^D$. The words vary
independently across candidates; only their signs are private. The raw rows are
\begin{equation}\label{eq:rows}
\begin{aligned}
 a&=e_0,\\
 s_j&=\tfrac12 e_0+\tfrac78 e_1,\\
 p_i&=\tfrac12e_0+\tfrac78u_i,\\
 w_i&=\tfrac12e_1+\tfrac78u_i+\alpha z_i.
\end{aligned}
\end{equation}
Candidate labels are one; every other label is zero. All nonzero entries
are exactly stored normal FP32 values in the stated dimension range.
Candidate labels, graph, signatures, roles, and blocker payloads are public.
The entire public contract is fixed independently of the private signs.

At the stored binary64 threshold $\tau=\operatorname{float64}(0.4)$,
the frozen ordered double-precision (FP64) normalization and dot product give
\[
                         B(w_i)=S_0\cup\{p_i\}
\]
uniformly over all private signs. Ideal required edges exceed $.43$;
required nonedges are below $.372$. The scorer error is below $2^{-32}$
under the explicit arithmetic assumptions in Appendix~\ref{app:scorer}.
The operation order and arithmetic assumptions are part of the scorer
contract; Appendix~\ref{app:scorer} gives the full margin proof.

\subsection{From an exposed candidate to an information bound}
Initially only the zero-label anchor is selected. At budget $b$, no
candidate can lose its $b+1$ blockers, and every target head is zero.
At budget $K=b+1$, the request $F_i=S_0\cup\{p_i\}$ selects $a,w_i$.
Each candidate has exactly $K$ blockers, so any allowed request selecting
it must equal $F_i$; no allowed request selects two candidates.
For exact rational ridge with rational $\lambda>0$, define
\[
                \kappa=16641/16384+2\lambda.
\]
Since $a\perp w_i$, the resulting head is
\begin{equation}\label{eq:hardhead}
 \theta_i=\frac{w_i}{\kappa}
     =\frac{\tfrac12e_1+\tfrac78u_i}{\kappa}+\frac\alpha \kappa z_i.
\end{equation}
Its $H$ coordinates reveal all $D$ private signs of candidate $i$.

\begin{theorem}[Sharp private-information transition]\label{thm:bits}
On this fixed public family, the minimum deterministic exact model-query
state is zero private bits at horizon $b$ and $mD$ private bits at horizon
$b+1$. The initial state of an exact sequential service has the same lower
bound. It also holds for randomized services exactly correct almost surely
for every fixed input and query.
\end{theorem}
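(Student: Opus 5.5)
The argument has two parts: the horizon-$b$ case, where no state is needed, and the horizon-$b+1$ case, which is a counting argument over the $2^{mD}$ private sign patterns.

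\emph{Horizon $b$.} Every candidate $w_i$ has $B(w_i)=S_0\cup\{p_i\}$ of size $b+1$. By the blocker-count criterion \eqref{eq:eligible}, no allowed request with at most $b$ deletions can select any candidate. The selected set under any allowed $F$ therefore contains only the anchor, blockers and individual blockers, and all of them have label zero. Hence $H=0$ in \eqref{eq:ridge}, and the exact head is $W=0$ for every allowed query. A state that stores nothing private, for example one that always outputs zero, is then exactly correct. This gives the upper bound of zero private bits.

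\emph{Horizon $b+1$: lower bound.} Fix the public contract and let the private input range over all sign words $(z_1,\dots,z_m)\in(\{\pm1/\sqrt D\}^D)^m$. There are $2^{mD}$ of them. The query $F_i=S_0\cup\{p_i\}$ is allowed, and by the stability of the scorer (the uniform blocker identity, taken from Appendix~\ref{app:scorer}) it selects exactly $\{a,w_i\}$ for every sign pattern. We compute $G=aa^\top+w_iw_i^\top$, $H=w_i$ (since candidate labels are one and $a$ has label zero) and $n=2$. Because $a\perp w_i$, the solve yields \eqref{eq:hardhead} with $\kappa=\norm{w_i}^2+2\lambda$. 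Here $\norm{w_i}^2=1/4+49/64+\alpha^2=16641/16384$, which is independent of the signs, so $\kappa>0$ is a public constant.

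The $H$ block of $\theta_i$ is $(\alpha/\kappa)z_i$, so the answers to the queries $F_1,\dots,F_m$ jointly determine every private sign. A deterministic state $\sigma$ from which all these answers are computed must therefore be an injective function of the private input. It must take at least $2^{mD}$ values, that is, at least $mD$ bits. Because we charge all private-dependent readable information to the state, this bound is genuine. For the matching upper bound, storing the $mD$ signs together with the public contract suffices: for any allowed $F$, the public graph determines $S_F$, and the exact ridge solve can then be evaluated.

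\emph{Sequential and randomized extensions.} For an exact sequential service, the initial state must answer the first request, whichever $F_i$ that turns out to be, so it obeys the same injectivity argument and the same $mD$ lower bound. For a randomized service that is exact almost surely for every fixed input and query, take the intersection over the finitely many queries $F_1,\dots,F_m$. Outside a null set of seeds, the state decodes every $z_i$ correctly. Consequently two distinct inputs cannot share a state value on a common positive-measure seed set, which yields $H(\sigma)\ge mD$, or equivalently support size at least $2^{mD}$ for almost every seed. The main obstacle is this randomized formulation: it requires care about what "private bits" measures (support size per seed versus entropy). I would first try fixing a good seed, for instance by Fubini, and then reducing to the deterministic count.
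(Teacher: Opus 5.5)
Your proposal is correct and essentially matches the paper's proof. All allowed heads vanish at horizon $b$. At horizon $b+1$, the public requests $F_i$ force the state to be injective over the $2^{mD}$ sign assignments, storing the signs gives the matching upper bound, and the same argument covers the initial state of a sequential service. The randomized case reduces to the deterministic one on a common probability-one set of coins.

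Two small points on that last step. No Fubini argument is needed, because the common set is just a finite intersection over the $2^{mD}$ inputs and the finitely many requests. Also, what matches the paper's notion of a bit-limited state is the per-coin count of at least $2^{mD}$ distinguishable values, not an entropy bound.
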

\begin{proof}
There are $2^{mD}$ independent sign assignments. Any two differ on some
candidate $i$ and produce different heads on its public request $F_i$.
Their states must therefore differ. Conversely, storing all signs allows
reconstruction of the indicated head on $F_i$ and zero on all other
allowed queries. At horizon $b$, always returning zero suffices. There
are finitely many inputs and requests, so a randomized zero-error service
has a common probability-one set of coins on which the deterministic
distinguishability argument applies.
\end{proof}

Here a $\bits$-bit state has at most $2^{\bits}$ distinguishable values
conditional on public information and independent public coins. Usable
length or framing choices are charged as well. Each isolating request $F_i$ exposes
one candidate, but the initial state must support every candidate's request.
That is why the lower bound sums all $mD$ signs rather than the $D$ signs
revealed by a single answer. The requests delete only public blocker
payloads, so receiving those deleted payloads supplies none of the missing
private information. Public metadata, padding, and workspace consume
additional physical storage under the contract of Section~\ref{sec:setup}.

\subsection{The minimum information after earlier deletions}
For cumulative deletion $F$ under the original $K=b+1$, define
\begin{align}
 I(F)&=\{i:w_i\notin F,\ |B(w_i)\minus F|\le K-|F|\}\nonumber\\
     &=\{i:F\subseteq S_0\cup\{p_i\}\},\qquad e(F)=|I(F)|.\label{eq:live}
\end{align}
The equality uses the specific construction, not arbitrary graphs.

\begin{corollary}[Conditional retained state]\label{cor:live}
For a fixed admissible $F$, correctness for every private assignment and
every allowed continuation requires and is achievable with exactly
$e(F)D$ private bits.
\end{corollary}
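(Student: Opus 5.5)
The plan mirrors the proof of Theorem~\ref{thm:bits}, now relativized to the remaining budget $h=K-|F|$ after a fixed admissible $F$. First I characterize the allowed continuations. A continuation adds a set $F'$ disjoint from $F$ with $|F'|\le h$, giving cumulative set $F\cup F'$ of size at most $K$. Since every candidate has exactly $K$ blockers $S_0\cup\{p_i\}$, the cumulative set selects $w_i$ only if it equals $F_i=S_0\cup\{p_i\}$. This is possible from $F$ precisely when $F\subseteq F_i$, which is the index set $I(F)$ of Equation~\eqref{eq:live}. I will briefly verify that equality from the eligibility formula~\eqref{eq:eligible}. The $K-|F|$ remaining budget must cover $|F_i\minus F|=K-|F_i\cap F|$, which forces $F\subseteq F_i$. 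Conversely, if $F\subseteq F_i$ then $w_i\notin F$, because $F_i$ excludes candidates.

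Second, I identify the selected set on every reachable cumulative set. The anchor has no blockers, and I will check whether the anchor itself may be deleted. If $a\in F$, then $F$ is not contained in any $F_i$, so $e(F)=0$. Each $s_j$ and $p_i$ is blocked by $a$, since their rows have inner product with $e_0$ well above the threshold, so they are selected only once $a$ is deleted. On the other hand, $p_i$ is selected only if $a$ is deleted and $p_i$ survives, and any selected blocker carries label zero. I must make sure such selections can only yield heads that are independent of private signs. In every reachable configuration where no candidate is selected, the selected rows are all public with zero labels, so $H=0$ and the head is zero. When $w_i$ is selected, the cumulative set equals $F_i$, so the selection is $\{a,w_i\}$ and the head is $\theta_i$ of Equation~\eqref{eq:hardhead}.

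Third comes the lower bound. The private signs of candidates outside $I(F)$ never affect any allowed answer. Fix those signs and the public data arbitrarily, and vary the $2^{e(F)D}$ sign words of candidates in $I(F)$. Two assignments that differ on some $i\in I(F)$ give different $H$ coordinates of $\theta_i$ on the allowed continuation reaching $F_i$, because $\alpha/\kappa\neq0$. So the retained states must differ, and counting yields $e(F)D$ bits. Randomized zero-error services reduce to this case by the same common-coin argument used in Theorem~\ref{thm:bits}. Fourth comes achievability. Store exactly the $D$ signs of each $w_i$ with $i\in I(F)$, indexed by the public ordering. On a query, answer $\theta_i$ if the cumulative set equals some $F_i$ with $i\in I(F)$, and the public zero head otherwise. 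Further deletions only shrink $I$, so the state can drop words sequentially while staying correct.

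The main obstacle is the second step: confirming that no reachable configuration other than $F_i$ yields a sign-dependent head. This includes configurations that delete the anchor and admit some $s_j$ or $p_l$, and configurations where a partially unblocked candidate might interact with others. I would handle it by noting that labels outside candidates are zero, so $H$ is nonzero only when some candidate is selected. Candidate selection is pinned to $F_i$ by the exact blocker count, so no other configuration can reveal private signs.
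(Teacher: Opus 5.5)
Your proposal is correct and follows essentially the same argument as the paper. You isolate each $i\in I(F)$ by the continuation to $F_i$ to force $D$ distinguishable signs, store exactly those words for achievability, and use $I(F')\subseteq I(F)$ for sequential discard; beyond this, you make explicit the verification of Equation~\eqref{eq:live} and the zero head on non-isolating configurations (all non-candidate labels are zero), both of which the paper leaves implicit.
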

\begin{proof}
Each $i\in I(F)$ can still be isolated by deleting its remaining blockers,
so its signs must be distinguishable. Storing just those words suffices.
Under further deletion $F'\supseteq F$, $I(F')\subseteq I(F)$, so a
discarded word never becomes relevant again within this horizon.
\end{proof}

A small instance illustrates why current output and future state differ.
Take one common blocker ($b=1$), two candidates ($m=2$), and four private
signs per candidate ($D=4$). Use $k=4$ with public signatures
$(1,1,1,1)/2$ and $(1,1,-1,-1)/2$, so the full feature dimension is ten.
The original budget is $K=2$. Figure~\ref{fig:live} compares three
alternative first requests and the private words they leave reachable.

\begin{figure}[t]
\centering
\begin{tikzpicture}[x=1cm,y=1cm,
 font=\sffamily\fontsize{9}{11}\selectfont,text=figink]
\path[use as bounding box] (0,0) rectangle (15.8,4.1);
\node[font=\sffamily\bfseries\small] at (7.9,3.86) {One starting state. Three alternative first deletions.};
\node[text=figgray] at (7.9,3.46) {Each leaves the current model at zero and one deletion still available.};
\foreach \x/\request/\count/\tone in {0/s/8/figteal,5.4/p_1/4/figblue,10.8/a/0/figgray}{
  \fill[\tone!5,rounded corners=5pt] (\x,0) rectangle +(5,3.06);
  \draw[\tone,line width=2pt] (\x+.2,3.06) -- (\x+4.8,3.06);
  \node[anchor=west,font=\sffamily\bfseries\small] at (\x+.27,2.66) {Delete $\request$};
  \node[text=\tone,font=\sffamily\bfseries\Large,anchor=east] at (\x+4.7,2.66) {\count};
  \node[text=\tone,anchor=east,font=\sffamily\scriptsize] at (\x+4.7,2.24) {private bits};
  \node[anchor=west] at (\x+.27,1.64) {Candidate 1};
  \node[anchor=west] at (\x+.27,.93) {Candidate 2};
}
\foreach \x/\y/\tone in {0/1.64/figteal,0/.93/figteal,5.4/1.64/figblue}{
  \foreach \j in {0,1,2,3}{
    \filldraw[fill=\tone!17,draw=\tone,line width=.55pt,rounded corners=1pt]
      (\x+2.55+\j*.49,\y-.18) rectangle +(.36,.36);
  }
}
\foreach \x/\y in {5.4/.93,10.8/1.64,10.8/.93}{
  \foreach \j in {0,1,2,3}{
    \draw[draw=figgray!35,fill=white,rounded corners=1pt,line width=.5pt]
      (\x+2.55+\j*.49,\y-.18) rectangle +(.36,.36);
  }
  \draw[figgray,line width=.7pt] (\x+2.49,\y-.25) -- (\x+4.44,\y+.25);
}
\node[text=figteal,font=\sffamily\bfseries\small] at (2.5,.32) {Both words remain reachable};
\node[text=figblue,font=\sffamily\bfseries\small] at (7.9,.32) {Only word 1 remains reachable};
\node[text=figgray,font=\sffamily\bfseries\small] at (13.3,.32) {Neither word remains reachable};
\end{tikzpicture}
\caption{Identical current models, different retention needs. In this
constructed $K=2$ instance, each candidate has four private sign bits, shown as
filled squares. Crossed words cannot affect any permitted continuation. Counts give
the conditional minimum private information and exclude public metadata and
physical overhead.}
\label{fig:live}
\end{figure}

The shared blocker $s$ and individual blocker $p_1$ open different
continuations. Deleting the anchor $a$ uses budget without removing either
candidate's blockers. If the complete request is $\{s,p_1\}$, candidate 1
is already selected. Its four signs remain necessary even at remaining
budget zero, because the current head must still be recoverable.

The finite-family codec stores eligible sign words in candidate order.
A transition copies only words that remain eligible, using immutable
canonical exports. The export has a 48-byte header, a charged public
deletion ledger, and $\lceil e(F)D/8\rceil$ private payload bytes.
Fresh retained-data initialization and sequential export yield identical
bytes under the same public universe and remaining budget. The conditional
bound fixes $F$; Appendix~\ref{sec:limits} discusses request policies that
reveal information through their own adaptive choices.

\subsection{Approximate heads and finite output precision}

With expected squared parameter error at most $\varepsilon$ on each fixed
input and hard probe, the same private words yield a quantitative bound. Write $h_2$ for binary
entropy in bits and define the squared-error information function
\[
 R_{\rm sq}(v)=
 \begin{cases}
 1-h_2((1-\sqrt{1-4v})/2),&0\le v\le1/4,\\
 0,&v>1/4.
 \end{cases}
\]
A standard information-distortion argument \citep{shannon} gives
\begin{equation}\label{eq:rate}
 \bits\ge mD R_{\rm sq}\!\left(\frac{\kappa^2\varepsilon}{4\alpha^2}\right),
\end{equation}
The normalization relates head error to uncertainty about a private sign.
For a hidden sign encoded as a bit $Y_{ij}\in\{0,1\}$, the corresponding
exact head coordinate is $\alpha(2Y_{ij}-1)/(\kappa\sqrt D)$.
Rescaling an estimated coordinate to the bit interval multiplies its
squared error by $\kappa^2D/(4\alpha^2)$. The entropy bound then charges
the uncertainty removed across all $mD$ bits. Appendix~\ref{app:information}
gives the posterior-variance argument and its concavity step.

At zero error, Equation~\eqref{eq:rate} recovers the exact $mD$ lower
bound. Returning only the public part of Equation~\eqref{eq:hardhead}
uses no private signs and attains squared error $\alpha^2/\kappa^2$.
At that error the lower bound becomes zero. Thus the parameter-error
allowance controls how much of the private word must be retained.
The criterion applies to every fixed input and isolating request;
averaging over all deletion sets would dilute the difficult cases among
requests whose answer is zero.

The exact head is rational even though the input features are stored in
FP32. In the ten-dimensional example with $\lambda=1$, a hidden feature
coordinate is $\pm1/256$ and the corresponding head coordinate is
$\pm64/49409$, which has no finite binary expansion. A floating release
therefore needs a stated approximation contract. An actual certified
parameter radius below $\alpha/(\kappa\sqrt D)$ preserves every sign.
Appendix~\ref{app:certificate} gives a sufficient finite-output radius
and explains the distinction between parameter error and prediction risk.

\section{What the contract means for a training service}\label{sec:contracts}
The service must choose what it can reconstruct, then how to store it.

\subsection{Retain payloads and dependencies for different purposes}
Deleting an excluded record involves both payloads and dependencies.
Its own training contribution may be zero, but its identifier can still
appear in conditions controlling other records. Payload discard and
dependency discard serve different purposes. A correct implementation
must remove the record's selection influence as well as its own
contribution. In the coefficient representation, the substitution does
both operations and allows their effects to cancel where appropriate.

\subsection{Compare with the strongest simple storage alternative}
A natural baseline stores compact feature and label rows only for records
that remain eligible. It also retains their blocker incidences and degree
buckets. After a request, it identifies selected-set changes and updates
moments in a batch. The supplement calls this baseline B-E. It uses the
same eligibility boundary as the coefficient method, but delays expanding
a feature row into an outer product.

This is a consequential choice for high-dimensional features. A row uses
$O(d+q)$ values, while its dense ridge statistic uses $O(d^2+dq)$.
Combining many rows into a shared coefficient can amortize the larger
statistic. If few records share a key, compact rows can be preferable.
The earlier lexical development comparison favored the compact eligible-row
baseline in total bytes. A bound on necessary private information does
not rank these general implementations by physical memory.

A useful evaluation therefore compares full lifecycles under identical
requests. It includes initialization, graph and index storage, payloads,
moments, temporary gathered rows, solves, and serialization. Incremental
update time and total service time answer different questions. Fresh
reconstruction remains the correctness oracle; an optimized eligible-row
baseline is the practical cost comparator. Appendix~\ref{sec:design}
retains the prepared evaluation design and all unexecuted comparisons.

\section{Evidence on natural score graphs}\label{sec:evidence}
We examine two questions: does the fixed selection rule admit records
after deletion on natural features, and does a chosen similarity threshold
support the intended semantic application? These questions use different
oracles. Exact membership checks test the selection and repair algebra.
Model agreement tests the frozen semantic qualification criterion.

\subsection{Data, representations, and study populations}\label{sec:study}
The prepared AskUbuntu population contains 140,250 questions from the
corrected April 2024 export \citep{askubuntu}. Calibration uses 20,996
questions. Development is reserved by whole contributor source before
any effect measurement. The structural and repair diagnostics use the
original pre-guard prefix of 1,001 development-training records. They
were separately fixed after the failed calibration and retain all 51
original cutoffs: $.50,.51,\ldots,.99,.995$. No operating cutoff is
chosen from these diagnostics.

The representation is the pinned multilingual E5 base encoder \citep{e5},
with 768 stored FP32 coordinates per question. We encode complete documents
in successive 256-content-token chunks. Each chunk receives a
\texttt{query:} prefix and an attended-token mean, which is normalized.
A content-length-weighted average of the chunk vectors is normalized
again to produce the stored row. The scorer normalizes stored rows in
FP64 before cosine comparison. This fixed full-document recipe determines
the score distribution; Appendix~\ref{app:embedding} specifies its precision
and operation order.

Native accounts supply contributor proxies. Unknown and community owners
remain explicit singletons and are excluded from genuine source-withdrawal
requests. The complete split, ownership counts, source reservation, and
prepared tag-prediction task appear in Appendix~\ref{app:protocol}.
Neither diagnostic reads task labels, held-out feature rows, or model
judgments. Both reuse the same feature cache, priority order, and strict
comparison $\mathrm{score}>\tau$.

\subsection{Where deletion changes the selected set}
The grid contains markedly different graph regimes
(Table~\ref{tab:structure}). The first 25 cutoffs, $.50$ through $.74$,
produce complete graphs. At $.995$ the graph is empty. Between these
extremes, deleting an initially excluded record admits another record at
13 cutoffs, $.84$ through $.96$.

\begin{table}[t]
\centering
\small
\caption{Illustrative points from the full 51-cut grid on 1,001 development
records. The last column counts initially excluded records whose singleton
deletion admits at least one other record. These are fixed score graphs,
not selected or semantically qualified operating thresholds.}
\label{tab:structure}
\begin{tabular}{rrrrr}
\hline
Cutoff & Edges & Selected & Excluded & Excluded deletions with admission\\
\hline
.50 & 500,500 & 1 & 1,000 & 0\\
.84 & 245,841 & 7 & 994 & 1\\
.90 & 15,678 & 504 & 497 & 50\\
.96 & 197 & 952 & 49 & 7\\
.995 & 0 & 1,001 & 0 & 0\\
\hline
\end{tabular}
\end{table}

The intermediate graphs expose the distinction in Figure~\ref{fig:chain}.
At the illustrative $.90$ cutoff, 50 of the 497 excluded records each
have a deletion that admits another record. In a complete graph, every
later record has all earlier records as blockers. Deleting one excluded
record cannot remove that entire earlier set. In an empty graph, all
records are already selected. Graph density alone therefore does not
predict the excluded-blocker effect; the arrangement of blockers matters.

For uniform deletion of 128 records, the exact expected number of
admissions ranges from zero to 12.51 across the full grid. These
expectations concern the fixed panel and sampling rule. The execution
reproduced every graph with two tile sizes. A separate aggregate audit
checked all 204 exact expectations for budgets $1,8,32,128$.

\subsection{Exact repair throughout deletion sequences}
We apply the unchanged indexed algorithm to integer membership coordinates:
each record contributes its own coordinate when selected. This makes the
sum an exact indicator vector of the selected set and exercises the
coefficient mechanism without fitting a task learner.

One fixed record-deletion stream and one fixed contributor-proxy source
stream are reused at every cutoff. Each has an original horizon of 128
requests. The record stream removes 128 records; the source stream removes
261. The streams check both request units, rather than comparing their
effect sizes at matched removed-record volume.

\begin{table}[t]
\centering
\small
\caption{Exact integer repair checks over all 51 cutoffs. Counts come from
the same two fixed trajectories, not independent statistical replicates.
State equality compares logical coefficient maps and indices at the
remaining budget.}
\label{tab:repair}
\begin{tabular}{lr}
\hline
Check & Matches / checks\\
\hline
Selected membership after each singleton request & 13,056 / 13,056\\
Sequential versus batched retained state & 408 / 408\\
Sequential versus reverse-order retained state & 408 / 408\\
Retained state versus fresh reconstruction & 408 / 408\\
\hline
\end{tabular}
\end{table}

Every singleton update matches a direct raw-blocker membership oracle
(Table~\ref{tab:repair}). After $1,8,32,128$ requests in each trajectory,
we also compare sequential, batched, and reverse-order processing.
Recomputing the retained-data graph and constructing the summary at the
remaining horizon gives the same logical state at all 408 checkpoints.
These comparisons test the stronger conclusion of Theorem~\ref{thm:repair},
not just equality of the current selected count. Private-state comparisons
were executed in the cluster run; an independent aggregate audit checked
the bound receipts and reported counts.

\subsection{Semantic qualification: two failed calibrations}\label{sec:judges}
The intended semantic application requires a qualified similarity cutoff.
Annotation uses pinned Qwen2.5-7B-Instruct \citep{qwen} and
Mistral-Nemo-Instruct-2407 \citep{mistral}, with no human pair review.
Both receive the complete pair texts under a fixed rubric, without IDs,
scores, tags, or each other's answers. A pair is positive only when both
valid outputs call it an exact copy or substantially the same meaning.
Disagreements, other labels, uncertainty, and invalid or unavailable
outputs contribute zero positive mass and remain in the denominator.
The target is this joint model-label function. Model agreement and
order sensitivity are assessed as such \citep{judge}.

Version 3 completed 90 canonical pairs and nine swapped diagnostics per
model. It produced zero joint positives and no qualifying cutoff.
Version 4 is a separately versioned follow-up, with the earlier negative
result preserved. It enumerates all 220,405,510 unordered calibration
pairs and samples up to 30 pairs without replacement from each of 13
score strata. The realized sample has 360 canonical pairs and 36 swapped
diagnostics per model. Inverse-inclusion weights estimate the joint-positive
proportion above each original cutoff.

The frozen selection rule requires a weighted proportion of at least
95\% with nonzero support, choosing the least strict qualifying cutoff.
Only a selected cutoff proceeds to fresh validation, whose one-sided
95\% finite-population lower bound must reach 90\%. Neither version
selected a cutoff, so fresh validation was not run. In version 4, the
largest weighted proportion is 40\%, reached at $.99$ and $.995$
(Table~\ref{tab:calibration}). All 51 negative decisions were independently
recomputed. Appendix~\ref{app:protocol} gives invalid-response counts,
order diagnostics, sensitivity analysis, and the full decision protocol.

\begin{table}[t]
\centering
\small
\caption{Version 4 semantic selection at representative frozen cutoffs.
Pairs are sampled canonical pairs with score strictly above the cutoff.
Joint proportions use inclusion weights; the selection requirement is 95\%.}
\label{tab:calibration}
\begin{tabular}{rrr}
\hline
Cutoff & Pairs & Weighted joint positive (\%)\\
\hline
.90 & 330 & 0.01\\
.95 & 180 & 0.31\\
.98 & 90 & 9.66\\
.99 & 60 & 40.00\\
.995 & 30 & 40.00\\
\hline
\end{tabular}
\end{table}

The completed natural-data evidence establishes structural admissions and
exact membership repair on these development graphs. Both semantic
calibrations failed, leaving the threshold-qualified task, comparative
cost, and held-out effect study unexecuted. This failure concerns the fixed
representation, rater pair, and rubric; it does not establish the absence
of semantic duplicates in AskUbuntu. The original evaluation reservation
and unmet human-validation gates remain preserved.

\section{Related work}\label{sec:related}
\paragraph{Curation and complete-pipeline targets.}
\citet{cao} use summation-form learning to update feature selection and
models after deletion. \citet{ivmunlearning} casts unlearning as view
maintenance, including preprocessing in the maintained dataflow.
Dominici's thesis concerns data preparation pipelines and feature stores
\citep{dominici}; its metadata and research description were available
for comparison, while its full text was not. Our result fixes record-level
suppression and characterizes its exact statistic map under a decreasing
deletion budget.

SemDeDup studies embedding-space redundancy \citep{semdedup}; its released
suppression computation motivates our operator \citep{semcode}. ARCANE
combines representative selection with an exact-unlearning architecture
\citep{arcane}. Its Section~3.2 assigns no retraining cost to deletions
outside the selected subset. Our example shows why that property depends
on the selection contract. GRACE selects forget and retain coresets for
behavioral language-model unlearning \citep{grace}, a different target
from reconstructing a fixed curator and learner.

\paragraph{Provenance and additive repair.}
Polynomial provenance \citep{green}, provenance for aggregation and
difference \citep{aggregate}, DBSP's incremental query framework
\citep{dbsp}, and provenance-based regression updates \citep{priu}
provide the algebraic foundations. Our signed expansion couples substitution
with a decreasing degree cutoff. It preserves the fresh coefficient map,
including coefficients for future requests.

\paragraph{Continuing state and storage.}
\citet{ginart} include metadata for subsequent deletions in the object
compared with retraining. \citet{cohencontrol} relate unlearning to adaptive
history independence and distinguish current outputs from internal state
and later behavior. Learning-unlearning space complexity studies
budget-dependent storage \citep{space}. Ticketed unlearning separately
charges central information and per-record tickets \citep{ticketed},
including a multi-request threshold construction. Our bound charges all
private information accessible to repair, including tickets, under a
fixed public contract. Its matching codec concerns the constructed
feature family at each remaining horizon.

\paragraph{Information for counterfactual queries.}
\citet{counterfactual,counterfactualcode} study query-sufficient summaries
of binary histories. Their count-threshold example separates the state
needed for prediction from that needed for counterfactual answers.
Our construction ties the queries to deletion of public blockers and the
answers to exact ridge heads on the same finite features used by curation.
The approximate extension uses classical information-distortion tools
\citep{shannon} for expected squared head error on each fixed input and
probe. This criterion differs from the joint answer-tuple success event
in the counterfactual counting proof. Appendix~\ref{app:comparison}
records the precise comparison of these guarantees.

\section{Conclusion}
For exact repair after curation, the cumulative deletion budget determines
which excluded records can still matter. We turn this retention contract
into a repairable statistic summary. The finite-feature family shows that
it can require information absent from the current model. A service should
specify its curation target, cumulative horizon, and retained-data access
before discarding payloads.

\appendix
\section{Update accounting and state boundaries}\label{app:work}
The low-level integer-ID kernel rejects repeated, already deleted, invalid,
and over-budget IDs before mutation. The empirical service first
deduplicates IDs and removes already deleted IDs, so retries are no-ops,
including after budget exhaustion. It rejects invalid IDs and requests
with too many fresh IDs. The mathematical reference also treats retries as
no-ops. These interfaces apply the same substitution to valid fresh
deletions.

For singleton deletion of $u$, first obtain the inverse-incidence handles
of keys containing $u$. For each entry, remove its incidence at $u$, its
old dictionary key, and its degree-bucket membership. Remove $u$ from
the key. If the reduced key is absent, move the same entry handle to it;
all surviving inverse references remain attached to that handle and are
not rebuilt. If the key already exists, merge the values and remove all
remaining incidences of the retired source entry. A zero result also
retires the destination and its incidences. Moving a value transfers its
reference; only merges perform vector arithmetic and a zero test. After
reducing the horizon by one, remove the now-excessive degree bucket.
No key gains an element, and the number of live keys never increases.

Give each initial incidence a token. A shrink removes a token; a merge
or prune removes its source tokens. None are created. Thus at most $L_0$
inverse-incidence removals occur over the whole sequence.
Each merge decreases key count, giving at most $M_0$ vector additions.
A tuple shrink from size $j$ costs $O(j)$, and decreases its potential
$j(j+1)/2$ by $j$. This charges tuple copying, hashing, and comparisons
under the stated dictionary model. Streaming word-sized input IDs costs
$O(Q)$, including retries, for the abstract reference algorithm, which
does not sort requests. Per-call overhead adds $O(B)$ for $B$ calls; this
is absorbed into $O(Q)$ only for nonempty calls. The resulting core bound
is Equation~\eqref{eq:work}.

The frozen empirical service has additional work. Let $N$ be its fixed
public unit-ID count, and let call $b$ supply $q_b$ IDs with $f_b$ fresh
unique IDs, so $Q=\sum_b q_b$. It reconstructs the $N$-entry lookup on
every call and sorts the fresh IDs before invoking the kernel. Under
unit-cost identifiers and expected constant-time hashing, its update cost is
\[
 O\!\left(B(N+1)+Q+\sum_b f_b\log(f_b+1)+\Phi_0+sM_0\right).
\]
Direct low-level calls on arbitrary unsorted lists instead have the bound
\[
 O\!\left(B+Q+\sum_b q_b\log(q_b+1)+\Phi_0+sM_0\right).
\]
The service supplies a sorted list to the kernel, so a second arbitrary-order
sort is not charged. Its temporary lookup requires $O(N)$ words in addition
to request containers and the core state. String decoding, hashing and
comparison costs require separate byte accounting. Empty calls still cost
work. These API costs must be included in complete lifecycle measurements.

The coefficient values and incidence metadata occupy
$O(sM_0+M_0+L_0)$ words, plus membership storage. Rational arithmetic
requires additional bit-length accounting. A canonical export removes
zero keys and orders keys and values, with at least linear output cost.
The fast in-memory structure is canonical only as a logical map modulo
handles, not as allocator memory. A final exact map can be equal to a
fresh construction while both a floating-point accumulation order and
its serialized bytes differ. Numerical checks against a fresh oracle
must therefore state their contract rather than borrow exact equality.

\section{Finite-family graph and scorer proof}\label{app:scorer}
All statements in this appendix concern the mathematical rows of
Equation~\eqref{eq:rows}, not natural encoder outputs. Powers-of-four
$k,D$ make $1/\sqrt{k}$ and $1/\sqrt D$ dyadic. The dimension cap keeps
every nonzero stored value in the ordinary normal range; the range argument
below also excludes nonzero subnormal intermediates in scoring.
The norms, independent of private signs, are
\[
 \norm a^2=1,\quad \norm{s_j}^2=\norm{p_i}^2=65/64,
 \quad \norm{w_i}^2=16641/16384=(129/128)^2.
\]
The anchor blocks every $s_j$ and $p_i$ since its cosine with them is
$4/\sqrt{65}>.49$. It has zero dot product with every candidate.
Common-blocker/candidate cosine is
$448/(129\sqrt{65})>.43$; individual-blocker/own-candidate cosine is
$784/(129\sqrt{65})>.75$. For $i\ne j$, the individual-blocker/candidate
cosine is at most $784/(6\cdot129\sqrt{65})<.126$.
Candidate-candidate cosine is at most
\[
 \frac{1/4+49/(64\cdot6)+1/16384}{16641/16384}
                 =\frac{18563}{49923}<.372.
\]
These inequalities cover every earlier candidate blocker decision.
Additional edges among the public blockers do not affect the proof,
since the anchor initially suppresses them and they have zero labels.

The exact stored threshold is
\[
 \tau=\frac{3602879701896397}{2^{53}}
      =\frac25+\frac{1}{5\,2^{53}}.
\]
For this family, raw squared coordinates and partial norm-square sums
are exact in binary64: they are multiples of $2^{-34}$ and have
magnitude below two. Assume round-to-nearest ties-to-even, correctly
rounded square root and division, and the specified coordinate order,
without arithmetic-changing reassociation or clipping. With
$u=2^{-53}$, set
\[
 \rho=\frac{2u}{1-u},\qquad
 \gamma=\frac{(d+1)u}{1-(d+1)u}.
\]
The normalized coordinate relative error is at most $\rho$. Products
therefore contribute at most $2\rho+\rho^2$ in absolute cosine error,
using Cauchy--Schwarz. Ordered dot-product rounding contributes at most
$\gamma(1+\rho)^2$. Hence
\[
E_d=2\rho+\rho^2+\gamma(1+\rho)^2<2^{-32}
                    \qquad(d\le2^{20}).
\]
To justify the relative-rounding model even under cancellation, nonzero
computed normalized coordinates have magnitudes between $2^{-18}$ and
two. Rounded nonzero products are at least $2^{-37}$ and are multiples
of $2^{-89}$. Ordered sums remain on that grid: rounding either preserves
the value or moves to a coarser subgrid. A sum is therefore zero or has
magnitude at least $2^{-89}$, above the subnormal range. The crude bound
$4d\le2^{22}$ excludes overflow. These facts cover the norm, division,
product, and accumulation range assumptions used above.
This cannot cross the edge/nonedge margins. It proves the uniform graph
under the stated scorer, independently of the $mD$ private signs.
Native runtime admission and implementation replay remain separate checks.

Public signature existence is also qualified. Hadamard rows give the
stated orthogonal case. Alternatively, independent normalized signs have
pairwise tail probability at most $2e^{-k/72}$. A union bound gives a
code when $k\ge\lceil72\log(2m(m-1))\rceil$, choosing an admitted
power of four that still obeys the dimension cap. This does not permit
arbitrarily many signatures at fixed dimension.

\section{Squared-error information bound}\label{app:information}
Let $Y_{ij}$ be independent uniform bits and express the hidden signs as
$2Y_{ij}-1$. There are $n=mD$ bits. Given a state $M$ and independent
public coins $R$, rescale an approximate hard-probe head by
\[
 \widehat Y_{ij}=\frac12\left(1+
                       \frac{\kappa\sqrt D}{\alpha}\widehat\theta_i[H_j]\right).
\]
Clipping to $[0,1]$ cannot increase squared error, and
\[
 \frac1D\sum_j(\widehat Y_{ij}-Y_{ij})^2
 \le\frac{\kappa^2}{4\alpha^2}\norm{\widehat\theta_i-\theta_i}^2.
\]
Posterior means $p_l=\Pr(Y_l=1\mid M,R)$ minimize conditional squared
error, even if a decoder uses fresh randomness. Thus average posterior
variance is at most $v=\kappa^2\varepsilon/(4\alpha^2)$. For
$0\le s\le1/4$, the function
\[
 g(s)=h_2\!\left(\frac{1-\sqrt{1-4s}}2\right)
\]
is increasing and concave. One derivative check substitutes $s=q(1-q)$;
the needed sign follows from
$r-1/r-2\log r\ge0$ for $r\ge1$, whose derivative is
$(r-1)^2/r^2\ge0$ and value at one is zero. Since
$h_2(p)=g(p(1-p))$, conditional subadditivity and Jensen give
\[
 H(Y\mid M,R)\le\sum_l\mathbb E h_2(p_l)\le n g(v).
\]
The state alphabet contract implies $H(M\mid R)\le\bits$, so
$\bits\ge I(Y;M\mid R)\ge n[1-g(v)]$, proving
Equation~\eqref{eq:rate}. For $v\ge1/4$ the zero bound applies. Correctness
averaged over uniform signs and a uniform hard-probe index already
suffices. Averaging over all valid deletion sets is weaker because many
produce a zero head and can dilute the error.

The zero-information threshold is attained by returning just the public
component of the hard-probe head. Its squared error is $\alpha^2/\kappa^2$
for every sign assignment. The repository also gives a finite covering
code upper bound for a query summary. It can require an enormous public
codebook and does not give an efficient sequentially erasable lossy codec;
we make no such implementation claim here.

\section{Finite-output and utility limits}\label{app:certificate}
The hidden coordinate magnitude in a hard-probe exact head is
$a_H=\alpha/(\kappa\sqrt D)$. A deterministic certified parameter radius
strictly below $a_H$ identifies every hidden sign and transfers the
exact $mD$ private-bit bound. If $0<\lambda\le1$, then $\kappa<4$ and
$\sqrt D\le2^{10}$, giving $a_H>2^{-19}$. An actual radius at most
$2^{-20}$ is therefore sufficient. A configured tolerance is not a
certificate: the radius must cover moment accumulation, solving, and
release rounding relative to the prescribed exact target. We have not
established such a certificate for the ongoing natural-text learner.

For a public test distribution uniformly sampling an $H$-basis vector,
prediction discrepancy is $\norm{P_H e}^2/D$ for head error $e$.
Uniform sampling of all coordinate basis vectors instead yields
$\norm e^2/d$. Consequently a constant parameter-error obstruction does
not establish constant task risk. Native tag prediction, natural source
requests, and matched-cost baselines are needed to assess practical value.

\section{Reproducibility, compute, and assistance disclosure}\label{app:repro}
This standalone source embeds the unmodified official TMLR style and its
bibliography. The anonymous supplement contains an explicit subset of source
code, exact annotation prompts and schemas, and aggregate evidence.
The code includes data adapters, embedding and task configuration, development
selection, pair sampling, and the production-reused rendering and inference
kernels. Its source map distinguishes unchanged research modules from a
small extracted support helper. Deployment wrappers and private activation
locks are excluded; the package documents these remaining inputs.
A top-level \texttt{README.txt} maps claims to files, and
\texttt{python3 -B check\_all.py} runs the public checks with Python 3.10
or later, without model downloads or private data. The full
development archive retains historical proofs, reviews, code, negative results,
and frozen protocols. It is separate from the anonymous submission materials;
identifying infrastructure receipts and private observations are not included
in the supplement.

The active representation is \texttt{intfloat/multilingual-e5-base}.
The pinned model revisions are:\\
E5: \texttt{d128750597153bb5987e10b1c3493a34e5a4502a}.\\
Qwen: \texttt{a09a35458c702b33eeacc393d103063234e8bc28}.\\
Mistral: \texttt{04d8a90549d23fc6bd7f642064003592df51e9b3}.
Protocol manifests
bind tokenizer files, templates, prompt/schema, runtime, and sampling
seeds. Source/cache/task intake and independent arithmetic reviews bind
private outputs by exact bytes and hashes. Checksums support identity;
they do not establish semantic correctness or source rights.

\paragraph{Full-document embedding recipe.}\label{app:embedding}
Content is tokenized once without special tokens or truncation and split
into consecutive blocks of at most 256 content tokens. Separately tokenized
\texttt{query: } IDs are prepended to each block before the tokenizer adds
special tokens; content is never decoded and retokenized. For the pinned
E5 tokenizer, chunks are padded to 261 tokens. Batches contain chunks of
one document only. The encoder runs in evaluation mode on CPU in FP32,
with eager attention and deterministic algorithms enabled. Pooling masks
padding and averages all attended tokens, including prefix and special
tokens. Each FP32 mean is normalized using coordinate-ordered FP64 norm
accumulation. In document order, normalized chunk vectors are weighted by
their content-token counts and accumulated in FP64. Their weighted mean
is normalized again and cast to the stored FP32 row. Cosine scoring
renormalizes those stored rows in FP64. The supplement provides the exact
embedding and scoring source; the model card alone does not specify this
full-document procedure. Absolute scores and the failed calibration are
conditional on this fixed recipe, not a conclusion about every use of E5.

\paragraph{Asset attribution and replay limits.}
The pinned publisher cards declare MIT for
\href{https://huggingface.co/intfloat/multilingual-e5-base/blob/d128750597153bb5987e10b1c3493a34e5a4502a/README.md}{multilingual E5}
and Apache 2.0 for
\href{https://huggingface.co/Qwen/Qwen2.5-7B-Instruct/blob/a09a35458c702b33eeacc393d103063234e8bc28/README.md}{Qwen2.5-7B-Instruct}
and \href{https://huggingface.co/mistralai/Mistral-Nemo-Instruct-2407/blob/04d8a90549d23fc6bd7f642064003592df51e9b3/README.md}{Mistral-Nemo-Instruct-2407}.
We distribute no model weights. Stack Exchange's
\href{https://stackoverflow.com/help/licensing}{licensing guidance}
associates contributions from 2 May 2018 UTC onward with CC BY-SA 4.0
and identifies per-revision licenses through post timelines.
Our adapter retains supplied post attributes, including license and edit
metadata when present. This is not a complete author-history audit;
owner accounts remain contributor proxies. The public package excludes
corpus text, per-record identities and per-pair model observations.

The anonymous supplement checks its file hashes and the supplied aggregate
calibration and compute arithmetic. This verifies an extract of the evidence,
not the authenticity of the private observations. Recomputing the selection from individual observations
requires the private sample and audited model outputs; repeating inference
also requires the pinned source, assets and runtime. These are separate
verification levels. Fixed revisions and seeds alone do not establish
bitwise inference portability. The attribution ledger records documentary
sources and remaining gaps without changing the original acceptance gates.

\paragraph{Protocol and implementation history.}
The fully LLM branch does not relabel model judgments as human evidence or
inherit acceptance of the original human-gated study. The documentary
source chain is not publisher cryptographic attestation, and development
source acceptance is separate from earlier source gates.
Separate exact-byte audits verified prompt, token, and grammar identities
without repeating model calls. The prior Qwen audit's newline-normalization
failure and all original outputs remain preserved.
The algebraic, scorer, and codec artifacts retain their internal reviews;
these are implementation/reasoning checks, not external peer review.

\paragraph{Completed development preparation.}
Actual source/cache/task bindings and the source-disjoint development
allocation passed a separate implementation review. A resource pilot
scanned 262,144 real-cache cross-pairs and matched the original guard's
canonical first witness, score, and keep decision at $.50,.99,.995$.
The scan kernel took 0.7783 seconds in that pilot. This does not establish
full-scale runtime, full-guard completion, or empirical effect size.
The first stable-ID witness is not the maximum-scoring witness.

\paragraph{Observed compute scope.}
The two completed v4 inference jobs each allocated one A100 80GB GPU.
Their scheduler elapsed times were 11,836 seconds for Qwen and 7,209 for
Mistral, totaling 19,045 GPU-allocation seconds (5.29 hours). This includes
job overhead and model loading, excludes queue time, and is not measured
GPU utilization, energy, or end-to-end project time. The corrected Qwen
and Mistral CPU audits took 755 and 130 seconds; the retained failed Qwen
audit took 583 seconds, and selection statistics took one second.
These four CPU-job durations sum to 1,469 seconds, not CPU-core time.

A broader historical ledger covers 72 saved jobs, including those six.
Adding the two successful engineering pilots and the two v3 selection
jobs gives six model jobs totaling 25,759 A100-allocation seconds
(7.16 hours). The 32 E5 tasks, each encoding and then replaying its shard,
sum to 479,503 task-elapsed seconds (133.20 hours). Each requested two
CPUs, giving 266.39 requested-CPU hours when multiplied by elapsed time;
this is not measured CPU utilization or verified billed core time.
The remaining 34 stages sum to 8,396 job-elapsed seconds and include
preparation, acquisition, audits and a hardware probe. Four failed jobs
remain in the ledger. Parent jobs and array tasks are each counted once;
batch steps and raw/display aliases are not added again. Concurrent task
times overlap, so these sums are not project wall-clock duration.
Earlier setup, transfers and failed annotation attempts outside the saved
subset, local work, queue waits and energy remain unaccounted. A complete
project compute budget remains unavailable.

The subsequent structural diagnostic is separate from that historical ledger.
Its 51 graph constructions, alternate-tiling replays and checks took 248.513
seconds in the child process, with a reported peak resident set of
260,792,320 bytes. It requested two CPUs, used one numeric thread, and
allocated no GPU. These measurements reuse the existing feature cache;
they are neither full-pipeline cost nor a comparative efficiency result.

The separate repair diagnostic completed in 1,193.413 seconds, with
281,616,384 bytes of process peak RSS, reusing the existing embedding
cache. It requested two CPUs and no GPU, with one numerical thread.
These figures cover this diagnostic and its checks; they are not a
comparative service benchmark or the cost of the complete pipeline.

Literature coverage is also bounded. Primary publication pages and
official code were refreshed for the citations used here. The full
ARCANE selection description and the central, ticketed, and bounded-deletion
contracts in \citep{ticketed,space} were checked against their primary PDFs.
This is a contract comparison, not an independent audit of those proofs. The full six-page counterfactual-auditing manuscript \citep{counterfactual}
was inspected, including its sufficiency framework, Theorem~8 and proof,
validation scope, and limitations. Its approximation comparison above uses
the joint answer-tuple event required by that proof; the previously inspected
implementation remains separately cited \citep{counterfactualcode}.
Dominici's thesis \citep{dominici} remains unavailable in full text. The author
and laboratory pages establish its pipeline-unlearning scope, but do not
establish its operator-level guarantees or experimental results. That
comparison remains incomplete; absence of the text does not establish
absence of a corresponding result.
\paragraph{AI assistance.}
AI tools assisted manuscript drafting, code and figure preparation, source
checking, and internal review. Those internal checks are not external peer review, formal
verification, or authentication of private observations. The two annotation
models and their distinct empirical role are specified in
Section~\ref{sec:judges}. No AI system is listed as an author.

\section{Preserved downstream design: not executed}\label{sec:design}
Neither calibration failure was retuned away. The failed selection blocks
the full leakage guard, post-guard panels, threshold-qualified development
runs, and dependent held-out evaluation. The prepared guard package was
not launched. The original 21,332 prospective jobs are a design inventory,
not executed evidence. The original 100,000- and 200,000-row panels are
infeasible on this preparation; unavailable cells remain unfilled and are
not supplemented from held-out data. The original evaluation reservation
and unmet human-validation requirements remain unchanged.

The full development guard is conditional on passed fresh model
validation. It compares every reserved training row against all original
calibration and test rows, removing training leakage using the locked
threshold and the original scorer. This uses features and IDs, not later
evaluation labels. The intended scope is
$10{,}314\times42{,}063=433{,}837{,}782$ pairs. The 18,929 evaluation-test
labels stay uninspected during development. Development utility may use
only the 2,138 reserved test rows.

The planned requests include uniform record deletion, uniform genuine
source withdrawal, deletion among initially excluded records, and a
label-blind graph search for high-amplification stress requests. Uniform
record controls match source requests by removed-record volume. Stress
search does not estimate ordinary user behavior and is not an optimal
adversarial audit. Request streams, feasible panels, and resource bounds
must be sealed before their effect outcomes; the historical registry is
not authorization to mark every planned family as executed.

O-G recomputes the fixed curator and learner from surviving input; O-T
uses correct selected IDs but rebuilds moments. P-I uses indexed
coefficient repair, P-S scans coefficients, and P-R uses rank-coordinate
summaries with its reconstruction work charged. The essential B-E
baseline retains compact eligible feature/label rows, blocker incidences,
and buckets, then makes batched signed moment updates. It is not forced
to store an outer product for every row. B-A retains all payloads.
B-F deletes from the initial selected set and is a target-mismatch
diagnostic, not a correctness oracle. The smallest natural pilot must
compare at least P-I, O-G, and optimized B-E under identical requests.

Primary task quality is native multilabel mean squared error, with
micro/macro F1 and mean average precision secondary. Report selected-set
admissions, parameter and prediction deviations from the fresh oracle,
and the quality difference of the frozen-selection target. A large
admission count alone does not establish useful model change.
Costs include feature/cache access, graph/index state, moments, gathered
rows, peak workspace, serialization, initialization, updates, and solves.
Incremental timing and complete lifecycle timing are separate. A speedup
over fresh graph reconstruction alone is insufficient evidence of an
advantage over a strong incremental baseline.

\section{Limitations and broader impact}\label{sec:limits}
The theory assumes restriction consistency, fixed features, and a fixed
cumulative budget. It does not cover arbitrary reclustering, encoder
refitting, overlapping ownership, or nonconvex language-model fine-tuning.
The pinned E5 model card lists Stack Exchange among its pretraining
sources; overlap with the present corpus is unknown. Our target holds
that encoder fixed and does not remove influence from encoder pretraining.
The conditional retained-state bound fixes the deleted set $F$. An adaptive
request policy can reveal private information through the requests themselves.
The finite-family lower bound measures private information and exact
rational heads. It neither proves total-byte optimality nor shows that
natural text embeddings contain the constructed private sign patterns.
A floating solver needs an actual error certificate before an exact-head
lower bound transfers to its releases (Appendix~\ref{app:certificate}).

The LLM-only study can establish properties of its model-defined labels.
Shared training data and correlated biases can make both judges wrong;
agreement is not human validation. Invalid responses remain part of the
criterion, and order sensitivity is reported separately. Existing native
tags are dataset labels, not newly obtained human annotations. No human pair review was used, and no human-ground-truth claim is
available. This annotation boundary is separate from the authors' responsibility
for the paper and its conclusions.

The study uses an existing public forum export and recruits no new
participants. Native account identifiers are contributor proxies. Export filters and
unknown ownership limit the scope of source withdrawal. Source text may
contain personal information, so bodies and account-level manifests stay
private. Redistribution and attribution requirements must be resolved
before any raw-data release. Canonical state equality does not imply
physical erasure, privacy of prior answers, or compliance with every
possible deletion obligation. A useful application is making deletion-service
contracts explicit. A corresponding risk is retaining excluded personal data
in the name of future repair or mistaking model agreement for permission to
remove legitimate content. Data minimization, access controls, and a clearly
limited deletion horizon remain necessary operational safeguards.

The natural-text effect study was not executed. Accordingly, this paper
makes no claim of natural semantic value or competitive efficiency. Both
failed calibrations and the earlier unfavorable memory comparison remain
part of the evidence. The theoretical results identify an exact target and
its information requirements; they do not validate a deployment.

\section{Detailed comparison of information contracts}\label{app:comparison}
Learning-unlearning space complexity \citep{space} gives budget-dependent
upper and lower bounds for realizability testing in Theorems~5.4 and~6.3
of the full version. Appendix~G connects that task to empirical risk
minimization. Ticketed schemes \citep{ticketed} separate central auxiliary
information from tickets provided with forgotten records. Their main
Definitions~1--2 and Remark~3 specify exact predictor recovery after one
request; the threshold construction in Appendix~B.1 supports multiple
requests. Our private-information accounting includes all such usable
side information.

For binary histories of length $t\ge3$, Theorem~8(i--ii) of
\citet{counterfactual} requires $t+1$ states for all count-threshold
predictions and $2^t$ states for all single-bit-flip threshold queries.
The exact reconstruction argument and released implementation
\citep{counterfactualcode} concern query-sufficient history summaries.
Theorem~8(iii)'s counting proof requires at least $(1-\delta)2^t$ states
when the entire answer tuple is correct with probability at least
$1-\delta$ under uniform histories. This is a joint success event, not a
separate marginal success guarantee for each query. Our approximate
criterion bounds expected squared ridge-head error for each fixed input
and designated deletion probe. These numerical bounds have different
error contracts and are not interchangeable.

\section{Calibration protocol and prepared task}\label{app:protocol}
\subsection{Population, sources, and reserved task configuration}
The 140,250 prepared questions split into 98,187 training, 20,996
calibration, and 21,067 test rows. The source grouping contains 88,344
units, including 1,522 unknown units. Missing and community owners are
explicit unknown singletons. Source completeness concerns retained records
after the fixed export and filters, not every historical or private
contribution. Acquisition is checksummed and the export provenance is
recorded. Bodies and account-level records remain private.

A deterministic one-tenth whole-source reservation allocated 10,314
training and 2,138 test rows to development. Evaluation retains 87,873
training and 18,929 test rows; calibration is unchanged. Every reserved
source stays outside later evaluation, even when a pilot uses only part
of the allocation. Planned development targets of 1,000, 2,500, and 5,000
training rows include the entire boundary source and use frozen priority
after the guard. These are upper bounds before leakage filtering. The
executed 1,001-row diagnostic uses the original pre-guard prefix, as stated
in Section~\ref{sec:study}.

The prepared prediction task uses 20 native tags and retains all-zero
target vectors. Five source-grouped calibration folds selected
$\lambda=10^{-5}$ from $10^{-6},10^{-5},\ldots,10^{-1}$, with larger
$\lambda$ on exact ties. Out-of-fold predictions fix one F1 threshold per
tag, choosing the stricter threshold on ties and a never-positive rule
when no calibration positive exists. Test labels and deletion outcomes
are not used for configuration. Different representations need separate
task settings. These preparatory choices are not a completed downstream
deletion experiment.

\subsection{Inference, sampling, and validation rules}
The models run offline on admitted A100 GPUs with BF16, batch size one,
greedy constrained decoding, forced Flash attention, a 512-token output
budget, and a 32,768-token total budget. Truncation, summarization, format
repairs, semantic retries, and fallback attention are disallowed. Pair
texts are untrusted inputs; judges have no tools or retrieval. The
supplement supplies the exact instructions, schemas, renderer, and
constrained-inference kernels. No independence between the two raters is
assumed.

Version 4 includes same-source pairs in its calibration-pair population.
Its 13 right-closed score strata use cuts
$.50,.90,.91,\ldots,.99,.995$. Each nonempty stratum contributes a simple
random sample without replacement of up to 30 pairs. The 360 canonical
pairs support the weighted selection calculation; the 36 swapped pairs
are diagnostics and supply no extra votes.

A selected cutoff would proceed to a fresh simple random sample of up to
200 pairs from all above-threshold calibration pairs. Its exact one-sided
95\% finite-population lower bound must reach $.90$. Any change in a
model's availability/status or valid category on canonical overlap between
selection and validation makes validation unavailable. This bound concerns
the fixed/potential model-label function under the sampling model. There
is no retuning after failure and no fresh-validation result for either
completed selection version.

\subsection{Response and sensitivity diagnostics}
Version 3 attempted all 198 model calls. One canonical Mistral output was
invalid JSON and remained in the denominator without retry. All 51 cuts
were recomputed independently. Version 4 completed 396 calls per model.
Qwen returned 391 valid outputs and five invalid JSON outputs; Mistral
returned 396 valid outputs.

The version 4 stratified sample contains 27 joint positives among 360
canonical pairs. This unweighted ratio is not a population prevalence
estimate. Qwen marks 32 pairs positive and Mistral 151. Their six-way
categories differ on 308 of the 355 pairs with two valid outputs. Swapping
pair order changes status or category in three Qwen and 14 Mistral cases;
the binary-positive decisions change in one and six cases, respectively.
These characterize the raters, not accuracy against human judgments.

The frozen favorable-imputation sensitivity treats invalid or uncertain
judgments as potentially positive while preserving substantive negatives.
At cuts $.90,.95,.98,.99,.995$, the weighted values are respectively
$1.72\%,5.64\%,10.08\%,44.68\%,53.33\%$. The maximum over all cuts is
$8/15$, still below the selection requirement. These are sensitivity
values, not confidence limits. The all-cut ledger preserves exact
rational arithmetic and the original negative decisions.

\end{document}